\pdfoutput=1

\documentclass[11pt]{article}

\usepackage[preprint]{coling}

\usepackage{times}
\usepackage{latexsym}
\usepackage{xspace}
\usepackage{amsmath}
\usepackage{amsthm}
\usepackage{subfig}
\usepackage{booktabs}

\usepackage[T1]{fontenc}
\usepackage{multirow}

\usepackage[utf8]{inputenc}

\usepackage{microtype}

\usepackage{inconsolata}

\usepackage{graphicx}

\newcommand{\ie}{{\emph{i.e.,}}\xspace}
\newcommand{\eg}{\emph{e.g.,}\xspace}

\newtheorem{proposition}{Proposition}

\newtheorem{theorem}{Theorem}
\newcommand{\stitle}[1]{\vspace{1.6ex}\noindent{\bf #1}}
\newcommand{\eat}[1]{{}}

\newcommand{\figref}[1]{{Fig.~\ref{#1}}}
\newcommand{\secref}[1]{{Sec.~\ref{#1}}}
\newcommand{\equref}[1]{{Equ.~\ref{#1}}}
\newcommand{\tabref}[1]{{Table~\ref{#1}}}

\newcommand{\propref}[1]{{Proposition~\ref{#1}}}
\newcommand{\thmref}[1]{{Theorem~\ref{#1}}}

\newcommand{\ourmodel}{AsymKV}
\newcommand{\mathattn}{\mathcal{M}}
\newcommand{\mathsm}{sm}
\newcommand{\mathcat}{cat}

%
%

\title{AsymKV: Enabling 1-Bit Quantization of KV Cache with Layer-Wise Asymmetric Quantization Configurations}



\author{
 \textbf{Qian Tao\textsuperscript{1}},
 \textbf{Wenyuan Yu\textsuperscript{1}},
 \textbf{Jingren Zhou\textsuperscript{2}}
\\
 \textsuperscript{1}Tongyi Lab, Alibaba Group
 \textsuperscript{2}Alibaba Cloud Computing, Alibaba Group
\\
 \small{
   \textbf{Correspondence:} \href{mailto:qian.tao@alibaba-inc.com}{qian.tao@alibaba-inc.com}
 }
}

\begin{document}
\maketitle
\begin{abstract}
Large language models have shown exceptional capabilities in a wide range of tasks, such as text generation and video generation, among others. 
However, due to their massive parameter count, these models often require substantial storage space, imposing significant constraints on the machines deploying LLMs.
To overcome this limitation, one research direction proposes to compress the models using integer replacements for floating-point numbers, in a process known as Quantization.
Some recent studies suggest quantizing the key and value cache (KV Cache) of LLMs, and designing quantization techniques that treat the key and value matrices equivalently.

This work delves deeper into the asymmetric structural roles of KV Cache, a phenomenon where the transformer's output loss is more sensitive to the quantization of key matrices.
We conduct a systematic examination of the attention output error resulting from key and value quantization.
The phenomenon inspires us to propose an asymmetric quantization strategy.
Our approach allows for 1-bit quantization of the KV cache by implementing distinct configurations for key and value matrices.
We carry out experiments across a variety of datasets, demonstrating that our proposed model allows for the quantization of up to 75\% decoder layers with 1 bit, while simultaneously maintaining performance levels comparable to those of the models with floating parameters.
\end{abstract}

\section{Introduction}
\label{sec:intro}

Large language models (LLMs) have gained considerable interest of late due to their remarkable performance in various directions~\cite{textgenbook,engineering22machine,arxiv22gala,arxiv21mental,nips24timeseries}.
However, to achieve a high level of expressiveness, LLMs typically require billions of parameters, which necessitates substantial storage space and poses challenges for deployment on machines with limited resources.

A line of research has been dedicated to enabling the deployment of these models on machines with less available space through model compression techniques.
One such technique, model quantization, aims to represent the parameter matrices in LLMs using fewer bits (\eg integer, binary), thereby making them more suitable for deployment on hardware with limited storage capacity~\cite{arxiv23squeeze}.
More recently, the Key-Value cache (KV cache) in LLMs has been shown to occupy a large proportion of space~\cite{mlsys23efficient,arxiv23landmark}, especially when the length of context increases, and numerous works have focused on the quantization of KV cache~\cite{arxiv24intact,icml24kivi,arxiv24gear}.
Nonetheless, these studies typically employ the same quantization configuration for both key and value matrices.

In this paper, we cast a spotlight on the asymmetric structural roles of key and value matrices.
Our analysis reveals that, while a quantization method could yield a quantized matrix with a commensurate loss for both key and value matrices, \emph{the multiplication of query and application of the activation function to the key matrix} results in a larger loss of key matrix in the transformer's output as compared to the value matrix.

Drawing on this observation, this paper introduces a simple yet efficacious quantization strategy, which entails the use of asymmetric and layer-wise quantization configurations for key and value matrices.
Specifically, during the next token's inference, we employ a higher-bit quantization strategy (for instance, a 4-bit strategy) for the first $l$ decoder layers, whilst a lower-bit strategy (\ie the 1-bit strategy) is applied for the remaining decoder layers.
For key and value matrices, we choose different $l$ to account for their asymmetric structural positions.
Our extensive experiments reveal that the adoption of an asymmetric and layer-wise quantization strategy allows us to quantize a subset of layers using a $1$-bit approach, resulting in a strategy that is both space and computationally efficient.

In summary, the primary contributions of this paper can be outlined as follows:
\begin{itemize}
    \item We conduct the exploration of the asymmetric structural roles of the key and value matrices. Through practical and theoretical demonstrations, we show that the loss derived from the key matrix's quantization will be magnified relative to that of the value matrix, owing to the multiplication of the query and activation function applied specifically to the key matrix.
    \item To counteract the impact of asymmetric structural roles, this paper proposes {\ourmodel}, a simple yet effective approach that combines varied degrees of quantization configurations at the layer level. {\ourmodel} applies different quantization strategies to the key and value matrices, striking a balance between consumed memory and model performance.
    \item We conduct experiments on various datasets to substantiate the effectiveness of {\ourmodel}.
    Our results validate the asymmetric roles of the key and value matrices and demonstrate that by applying distinct quantization strategies to the key and value matrices, LLMs can be equipped with the extreme $1$ bit quantization while ensuring performance on par with the models utilizing floating-point parameters.
\end{itemize}

In the remainder of this paper, we first outline the basic definitions of transformers and KV cache in \secref{sec:pre}, then we highlight the observed asymmetric structural roles in \secref{sec:aas}, and present the design of {\ourmodel} in \secref{sec:layer}.
The evaluation and related works of AsymKV are discussed in \secref{sec:eval} and \secref{sec:rel}, respectively.
We finally conclude in \secref{sec:con}.
\section{Preliminaries}
\label{sec:pre}

\subsection{Attention Mechanism and KV Cache}
\label{sec:pre-attn}
Given the input embeddings of an attention mechanism, $\mathbf{X}\in \mathcal{R}^{t\times h}$, where $t$ represents the number of tokens already generated and $h$ is the dimension of attention head, an attention mechanism $\mathattn$~\cite{nips17attn,arxiv23gqa,arxiv19fasterattn} obtains the hidden states as follows: 
\begin{align}
\mathbf{Q}=\mathbf{X}\mathbf{W}^q,&\mathbf{K}=\mathbf{X}\mathbf{W}^k,\mathbf{V}=\mathbf{X}\mathbf{W}^v \notag\\
    &\mathbf{A}^w=sm(\frac{\mathbf{Q}\mathbf{K}^T}{\sqrt{h}}) \notag\\
    &\mathbf{A}^o=\mathbf{A}^w\mathbf{V} \notag 
\end{align}

\noindent Here, $\mathbf{W}^q$, $\mathbf{W}^k$ and $\mathbf{W}^v$ are the weight matrices for the query, key, and value, respectively, and
$sm(\cdot)$ signifies the softmax function.
$\mathbf{A}^w$ and $\mathbf{A}^o$ are typically referred to as the attention weights and attention output, respectively.

As an LLM generates tokens, the embeddings of the newly produced token are appended to the end of $\mathbf{X}$, necessitating the generation of query, key, and value matrices.
Consequently, we can store the embeddings of $\mathbf{K}$ and $\mathbf{V}$ from previous tokens and only generate the corresponding segments for the new token in $\mathbf{K}$ and $\mathbf{V}$.
Specifically, by partitioning $\mathbf{X}$ into the embeddings of previous tokens, \ie $\mathbf{X}_{1:t-1}$, and the embeddings of the current token, $\mathbf{X}_t$, we can leverage the key and value cache to enhance LLM's computational efficiency.
\begin{align}
    \mathbf{x}_q=\mathbf{X}_t \mathbf{W}^q,\text{ } \mathbf{x}_k=\mathbf{X}_t&\mathbf{W}^k,\text{ } \mathbf{x}_v=\mathbf{X}_t\mathbf{W}^v \notag\\
     \mathbf{K}_{1:t}=\mathcat(\mathbf{K}_{1:t-1},\mathbf{x}_k)&\text{, } \mathbf{V}_{1:t}=\mathcat(\mathbf{V}_{1:t-1},\mathbf{x}_v)\notag \\ \mathbf{A}^w&=\frac{\mathbf{x}_q\mathbf{K}_{1:t}^T}{\sqrt{h}} \label{eq:attn-qk}\\
     \mathbf{A}^w&=\mathsm(\mathbf{A}^w) \label{eq:attn-sm}\\
    \mathbf{A}^o&=\mathbf{A}^w\mathbf{V}_{1:t} \label{eq:attn-awv}
\end{align}
Here, the key and value matrices, $\mathbf{K}_{1:t-1}$ and $\mathbf{K}_{1:t-1}$ are cached while generating the last token.

\subsection{KV Cache Quantization}
\label{sec:pre-kvcq}

\stitle{Round-To-Nearest Quantization.}
While enhancing computational efficiency, the KV cache demands considerable memory, particularly as more tokens are generated.
To mitigate this, previous studies propose quantizing the key and value matrices into integers to accommodate more tokens using a Round-To-Nearest (RTN) methodology.

Formally, given a key or value matrix, $\mathbf{M}\in \mathcal{R}^{t\times h}$, an RTN quantization breaks down $\mathbf{M}$ into the quantized matrix $\mathbf{M}_Q$, the scaling matrix $\mathbf{s}$, and the zero-point matrix $\mathbf{z}$ as follows.

\begin{align}
    &\text{\textbf{Quantization Phase: }} \notag\\
    &\mathbf{z}=\min_i(\mathbf{M}), \text{ } \mathbf{s}=\frac{\max_i(\mathbf{M})-\min_i(\mathbf{M})}{2^b-1} \\
    & \mathbf{M}_Q=\lfloor \frac{\mathbf{M}-\mathbf{z}}{\mathbf{s}}\rceil \\
    & \text{\textbf{Dequantization Phase:}} \notag\\
    &\mathbf{M}^*=(\mathbf{M}_Q+\mathbf{z})*\mathbf{s} \label{eq:dequant}
\end{align}
Here, $b$ represents the required bit of quantization, and $\min_i$ (respectively $\max_i$) is a function that retrieves the minimum (respectively maximum) tensor of the input in relation to the $i$-th dimension. 
$i$ may be chosen from $\{1,2\}$, representing \emph{per-channel} or \emph{per-token} quantization respectively.

\stitle{Measurement of Quantization.}
Given a quantization method, a natural question would be how to measure the effectiveness of the proposed method.
Recent works~\cite{arxiv22gptq,arxiv24qaq} proposed using the squared error of the output between the quantized weights and full-precision weights to measure the effectiveness or optimize the strategies.
Formally, the error is 
\begin{align}
    &e=|||f(\mathbf{M}^*)-f(\mathbf{M})||_2^2
\end{align}

\noindent where $f(\cdot)$ could be a linear layer or the whole attention layer (\ie \equref{eq:attn-qk}-\equref{eq:attn-awv}).
Following these works, we use the squared error to study how the structure of attention affects the effectiveness.

\section{Asymmetric Attention Sensitivity of KV Cache Quantization}
\label{sec:aas}

\begin{figure}
    \centering
    \includegraphics[width=1\linewidth]{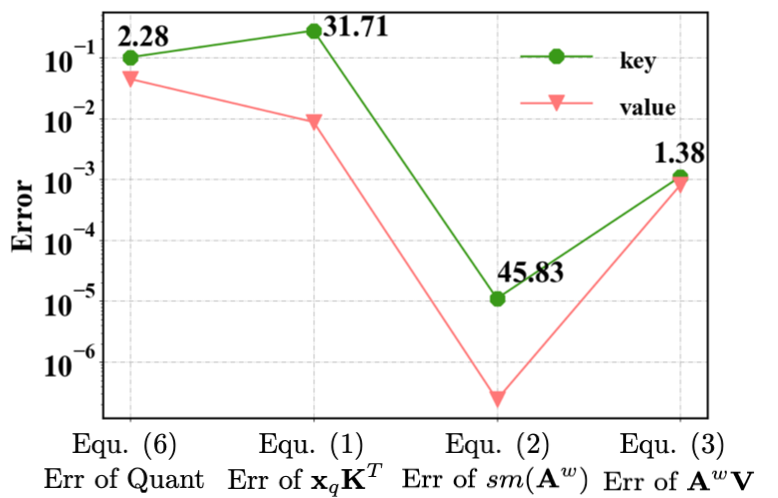}
    \caption{Squared error in the inference of attention.}
    \label{fig:loss-var}
\end{figure}
As shown in \equref{eq:attn-qk}-\equref{eq:attn-awv}, the key matrix and value matrix perform distinct roles in transformers.
While existing studies have proposed intricate quantization methods to mitigate the loss from quantization and some studies~\cite{arxiv24qaq} have recognized the disparate roles of the key matrix and value matrix, an important question still lingers: provided that the key matrix and value matrix play different roles from various perspectives, for instance, the multiplication of $\mathbf{x}_q$ and the operation of softmax function on key matrix, \emph{what factor truly contributes to the loss of the transformer?}

\stitle{Observation.}
For the key (respectively value) matrix, we hold the value (respectively key) matrix in floating type, and evaluate the \emph{accumulated} mean squared error between the output with key (respectively value) matrix in floating type and that with $2$-bit quantization at different stages of the attention.
\figref{fig:loss-var} illustrates the average loss per element during the inference of the Llama-2 model of size 7b.
Here, the green (respectively red) line denotes the MSE between the attention output with floating type and the $2$-bit quantization of the key (respectively value) matrix in different stages of the attention.
The number on the lines depicted in \figref{fig:loss-var} represents the ratio between the MSE that arises from the key matrix quantization and the MSE that arises from the value matrix quantization.

Interestingly, even though the quantization strategy results in a comparative loss (\ie the MSE after \equref{eq:dequant}) on the key matrix and value matrix, there emerges marginal gap loss for key matrices after the multiplication of $\mathbf{x}_q$, \ie after \equref{eq:attn-qk}.
The gap is further amplified after the softmax function, \ie after \equref{eq:attn-sm}.
This indicates that even though the quantization methods can guarantee a similar loss for key and value matrices, the multiplication of $\mathbf{x}_q$ and the activation function makes the MSE of the attention output for the key matrix significantly larger than that of the value matrix.

\begin{figure*}[t!]
  \centering
  \subfloat[Layer 21 Head 13]{\includegraphics[width=0.33\textwidth]{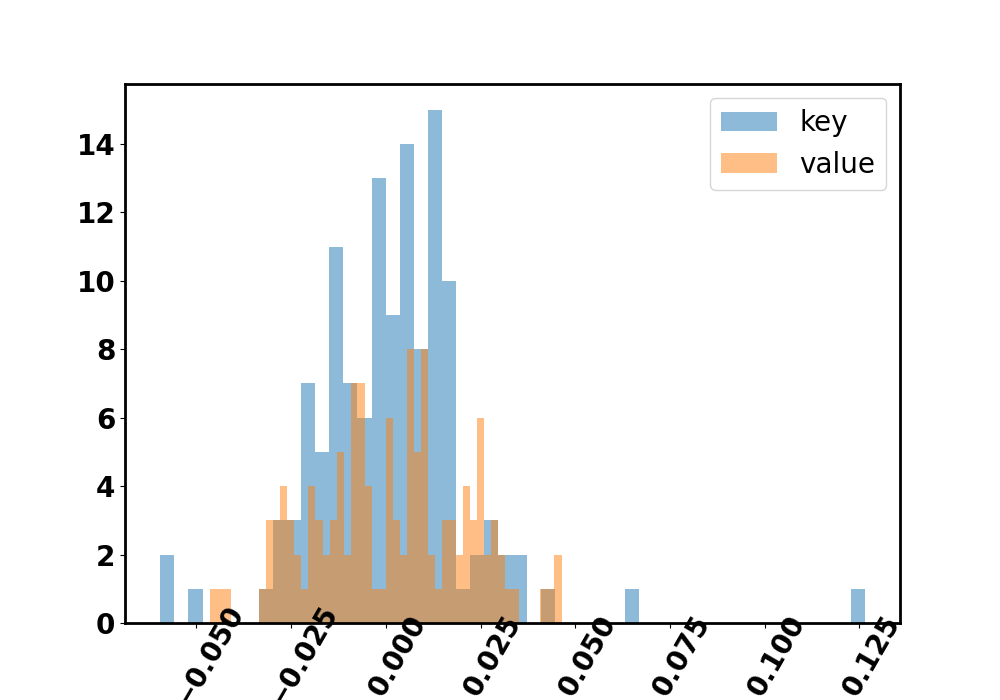}}
  \hfill
  \subfloat[Layer 30 Head 25]{\includegraphics[width=0.33\textwidth]{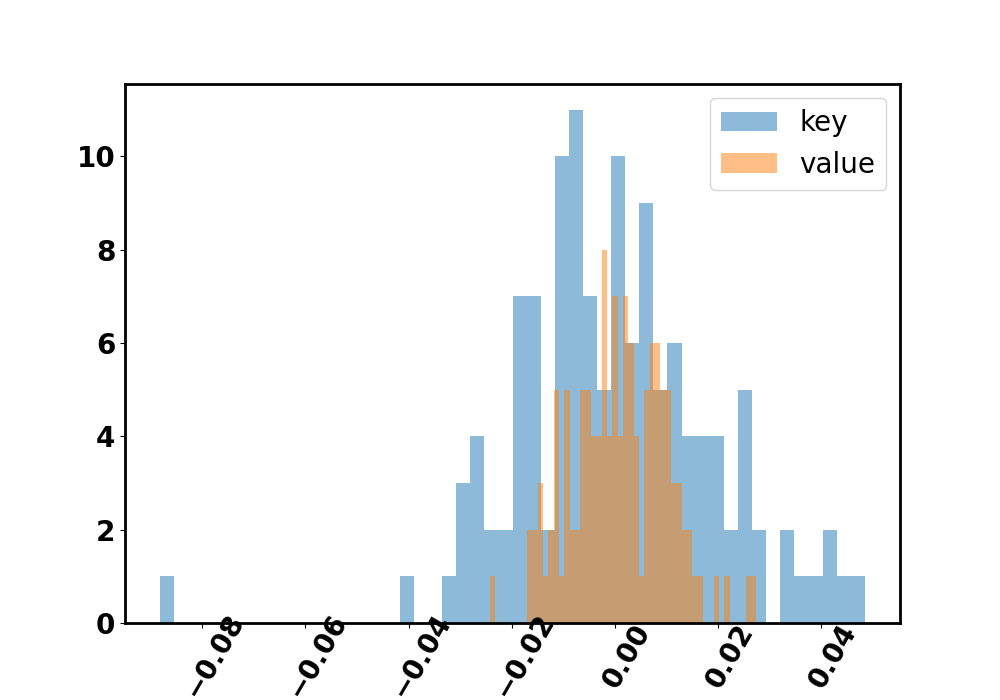}}
  \hfill
  \subfloat[Layer 31 Head 22]{\includegraphics[width=0.33\textwidth]{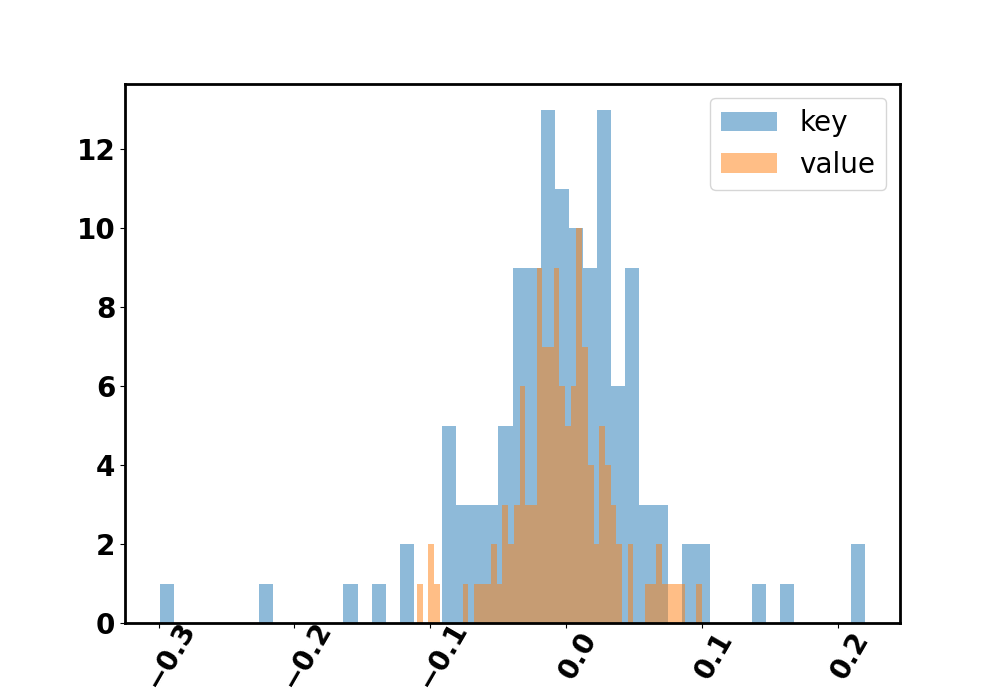}}
  \caption{Statistics of the error from key matrix quantization and value matrix quantization.}
  \vspace{-4pt}
  \label{fig:scatter-mse}
\end{figure*}

\stitle{MSE Amplification.}
Next, we analyze why the multiplication of $\mathbf{x}_q$ and the softmax function exacerbates the MSE of the key matrix.
Consider a matrix $\mathbf{M}$ and its quantization matrices, $\mathbf{M}_Q$, $\mathbf{z}$, and $\mathbf{s}$.
$\mathbf{M}$ could be either the key matrix or the value matrix.
Assume that the deviation of each element between $\mathbf{M}$ and the quantized matrix follows the distribution $\mathbf{P}$, \ie $|\mathbf{M}_{i,j}-\mathbf{M}^*_{i,j}|\sim\mathbf{P}$.
We aim to understand how the error of an element in the matrix varies after being multiplied by a vector.
\begin{proposition}
    \label{prop:err-mul}
    Consider a matrix $\mathbf{M}$ and its estimation $\mathbf{M}^*$.
    Denote the error by $\mathbf{E}=\mathbf{M}-\mathbf{M}^*$.
    Upon left multiplying by a matrix $\mathbf{A}$, the error matrix becomes $\mathbf{A}\mathbf{E}$.
    Correspondingly, a right multiplication of $\mathbf{A}$ results in the error $\mathbf{E}\mathbf{A}$.
\end{proposition}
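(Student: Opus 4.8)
The plan is to recognize that \propref{prop:err-mul} is an immediate consequence of the distributivity of matrix multiplication over subtraction, so the work reduces to unwinding the definition of the propagated error. First I would fix the interpretation of the statement: after left-multiplying both the true matrix $\mathbf{M}$ and its estimate $\mathbf{M}^*$ by $\mathbf{A}$, the relevant quantity is the difference of the two resulting products, namely $\mathbf{A}\mathbf{M}-\mathbf{A}\mathbf{M}^*$. The assertion ``the error matrix becomes $\mathbf{A}\mathbf{E}$'' is then precisely the claim that this difference equals $\mathbf{A}\mathbf{E}$, where $\mathbf{E}=\mathbf{M}-\mathbf{M}^*$.

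To establish this I would simply apply the distributive law. Since matrix multiplication distributes over addition, and hence over subtraction, whenever the dimensions are compatible, we obtain
\begin{align}
\mathbf{A}\mathbf{M}-\mathbf{A}\mathbf{M}^*=\mathbf{A}(\mathbf{M}-\mathbf{M}^*)=\mathbf{A}\mathbf{E}. \notag
\end{align}
If a more elementary justification is preferred, I would verify the identity entrywise: the $(i,j)$ entry of $\mathbf{A}\mathbf{M}-\mathbf{A}\mathbf{M}^*$ equals $\sum_k \mathbf{A}_{i,k}(\mathbf{M}_{k,j}-\mathbf{M}^*_{k,j})=\sum_k \mathbf{A}_{i,k}\mathbf{E}_{k,j}$, which is exactly the $(i,j)$ entry of $\mathbf{A}\mathbf{E}$. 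The right-multiplication case is entirely symmetric, giving $\mathbf{M}\mathbf{A}-\mathbf{M}^*\mathbf{A}=(\mathbf{M}-\mathbf{M}^*)\mathbf{A}=\mathbf{E}\mathbf{A}$ by the same distributivity.

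Since the algebra here is routine, there is no substantive mathematical obstacle; the proposition is essentially a statement of linearity. The only point requiring care is dimensional bookkeeping, \ie ensuring the column count of $\mathbf{A}$ matches the row count of $\mathbf{M}$ for the left-multiplication case (and the transposed condition for the right case), so that every product is well defined and the error $\mathbf{E}$ shares the shape of $\mathbf{M}$. With this compatibility in place, the identity holds verbatim and the proposition follows. The substance of the section, of course, lies not in this algebraic fact itself but in how it is subsequently instantiated with $\mathbf{A}=\mathbf{x}_q$ and with the softmax operation to explain the amplification of the key-matrix error relative to the value-matrix error.
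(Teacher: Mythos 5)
Your proposal is correct and follows essentially the same route as the paper: the paper's proof likewise verifies the identity entrywise, computing the $(s,r)$-th entry of $\mathbf{A}\mathbf{M}-\mathbf{A}\mathbf{M}^*$ as $\sum_i \mathbf{A}_{s,i}(\mathbf{M}_{i,r}-\mathbf{M}^*_{i,r})$ and handling the right-multiplication case by symmetry. Your additional framing via distributivity is just a cleaner statement of the same one-line argument.
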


\begin{proof}
Consider the $(s,r)$-th element of $\mathbf{A}\mathbf{M}$.
We could obtain its error
\begin{align}
    \label{eq:val-error}
    &\mathbf{A}_{s,\cdot}\mathbf{M}_{\cdot,r}-\mathbf{A}_{s,\cdot}\mathbf{M}^*_{\cdot,r} \notag \\
    =&\sum_i\mathbf{A}_{s,i}(\mathbf{M}_{i,r}-\mathbf{M}_{i,r}^*) \notag \\
    =&\sum_{i}\mathbf{A}_{s,i}\epsilon_{i,r}
\end{align}
which precisely corresponds to the $(s,r)$-th element of $\mathbf{A}\mathbf{E}$.
Similarly, the right multiplication of $\mathbf{A}$ results in an error matrix $\mathbf{E}\mathbf{A}$.
\end{proof}

Based on \propref{prop:err-mul}, we can deduce the error stemming from the value matrix's quantization.
\begin{proposition}
    \label{prop:err-val}
    Given a value matrix $\mathbf{V}$ and its quantization $\mathbf{V}^*$, with a quantization error $\mathbf{E}^v=\mathbf{V}-\mathbf{V}^*$, the error in the attention output is $\mathbf{A}^w\mathbf{E}^v$.
\end{proposition}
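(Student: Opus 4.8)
The plan is to reduce the statement to the left-multiplication case of \propref{prop:err-mul}. The first thing I would establish is the structural observation that, in \equref{eq:attn-qk}--\equref{eq:attn-awv}, the value matrix enters \emph{only} through the final step $\mathbf{A}^o=\mathbf{A}^w\mathbf{V}$, and plays no part in forming the attention weights $\mathbf{A}^w$: those are built from the query and key via \equref{eq:attn-qk} and the softmax of \equref{eq:attn-sm}. Consequently, quantizing the value matrix leaves $\mathbf{A}^w$ untouched, so the output computed from the true value matrix and the output computed from its quantization share the same weight matrix $\mathbf{A}^w$.

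Granting this, I would write the output with the true value matrix as $\mathbf{A}^w\mathbf{V}$ and the output with its quantization as $\mathbf{A}^w\mathbf{V}^*$, and subtract:
\[
\mathbf{A}^w\mathbf{V}-\mathbf{A}^w\mathbf{V}^*=\mathbf{A}^w(\mathbf{V}-\mathbf{V}^*)=\mathbf{A}^w\mathbf{E}^v .
\]
This is exactly \propref{prop:err-mul} applied with the multiplying matrix taken to be $\mathbf{A}^w$ and the estimated matrix taken to be $\mathbf{V}$, which tells us that the error after a left multiplication by $\mathbf{A}^w$ is $\mathbf{A}^w\mathbf{E}^v$. The conclusion then follows immediately once the previous proposition is invoked.

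The computation itself is routine linearity; the only point that genuinely requires care---and the step I would treat as the main obstacle---is justifying that $\mathbf{A}^w$ is identical in the quantized and unquantized cases. This is valid precisely because here the value matrix is quantized while the key (and hence $\mathbf{x}_q\mathbf{K}_{1:t}^T$ and its softmax) is held in floating type, consistent with the experimental protocol of this section. This structural fact is exactly what separates the value case from the key case: the value-side error propagates through the single linear map $\mathbf{A}^w$, whereas the key-side error must additionally traverse \equref{eq:attn-qk} and the nonlinear softmax of \equref{eq:attn-sm}, which is the mechanism that later amplifies the key-side MSE.
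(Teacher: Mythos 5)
Your proposal is correct and follows essentially the same route as the paper, which derives \propref{prop:err-val} directly from \equref{eq:attn-awv} and the left-multiplication case of \propref{prop:err-mul}. Your explicit justification that $\mathbf{A}^w$ is unchanged when only the value matrix is quantized is a point the paper leaves implicit, but the argument is the same.
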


\propref{prop:err-val} can be derived from \equref{eq:attn-awv} and \propref{prop:err-mul}.
On the other hand, it is also feasible to derive the error resulting from the quantization of key matrices, although this process is complex due to the involvement of softmax functions.

\begin{theorem}
\label{thm:key-err}
    Given a key matrix $\mathbf{K}$ and its quantization $\mathbf{K}^*$, with a quantization error $\mathbf{E}^k=\mathbf{K}-\mathbf{K}^*$, the error of the attention output is given by $(\mathbf{A}^w\odot (1-sr\cdot e^{\frac{\mathbf{E}^q}{\sqrt{h}}})\cdot \mathbf{V}$, where $\mathbf{E}^q=\mathbf{x}_q\mathbf{E}^k$, $\min\mathbf{E}^q$ and $\max\mathbf{E}^q$ are the smallest and largest elements of $\mathbf{E}^q$ respectively, and $sft=\sum_{j}e^{\frac{\sum_iq_i\mathbf{K}_{i,j}}{\sqrt{h}}}$ and $sft^*=\sum_{j}e^{\frac{\sum_iq_i\mathbf{K}_{i,j}^*}{\sqrt{h}}}$ are the dominator in the softmax function for the key matrix $\mathbf{K}$ and $\mathbf{K}^*$ respectively.
\end{theorem}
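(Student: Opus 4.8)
The plan is to compute the attention output error $\mathbf{A}^o - \mathbf{A}^{o*}$ exactly, where $\mathbf{A}^o = \mathsm(\mathbf{x}_q\mathbf{K}^T/\sqrt h)\mathbf{V}$ is the output computed from the true key and $\mathbf{A}^{o*} = \mathsm(\mathbf{x}_q(\mathbf{K}^*)^T/\sqrt h)\mathbf{V}$ is the output computed from the quantized key (holding $\mathbf{V}$ in floating type, as in the observation). Since only the attention weights depend on $\mathbf{K}$, by \equref{eq:attn-awv} the error factors as $(\mathbf{A}^w - \mathbf{A}^{w*})\mathbf{V}$, so it suffices to track how quantizing $\mathbf{K}$ perturbs the post-softmax weights. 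This parallels the value case of \propref{prop:err-val}, the crucial difference being that for the key the perturbation must first pass through \equref{eq:attn-qk} and then the nonlinear \equref{eq:attn-sm}.

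First I would propagate the key error into the pre-softmax logits. By \propref{prop:err-mul}, left-multiplying $\mathbf{E}^k$ by $\mathbf{x}_q$ yields the logit error $\mathbf{E}^q = \mathbf{x}_q\mathbf{E}^k$, so the perturbed logit at token $j$ is $\ell_j^* = \ell_j - \mathbf{E}^q_j/\sqrt h$, where $\ell_j = (\sum_i q_i\mathbf{K}_{i,j})/\sqrt h$. Exponentiating gives $e^{\ell_j^*} = e^{\ell_j}e^{-\mathbf{E}^q_j/\sqrt h}$, which already isolates the per-token multiplicative distortion caused by quantization; summing over $j$ recovers exactly the two softmax denominators $sft$ and $sft^*$ named in the statement.

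The main step, and the main obstacle, is pushing this additive logit perturbation through the softmax normalization, because the denominator $sft^*$ couples all token perturbations and so the error is not separable per element. The trick is to factor the true weight out and absorb the coupling into the single scalar ratio $sft/sft^*$: writing $\mathbf{A}^{w*}_j = e^{\ell_j^*}/sft^* = (e^{\ell_j}/sft)\cdot(sft/sft^*)e^{-\mathbf{E}^q_j/\sqrt h} = \mathbf{A}^w_j\cdot(sft/sft^*)e^{-\mathbf{E}^q_j/\sqrt h}$, so that $\mathbf{A}^w_j - \mathbf{A}^{w*}_j = \mathbf{A}^w_j\bigl(1 - (sft/sft^*)e^{-\mathbf{E}^q_j/\sqrt h}\bigr)$. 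Collecting these coordinates into a Hadamard product $\odot$ and substituting back gives the error $\bigl(\mathbf{A}^w\odot(1 - (sft/sft^*)e^{-\mathbf{E}^q/\sqrt h})\bigr)\mathbf{V}$, which matches the claimed form up to the sign and labelling conventions chosen for $sr$ and $\mathbf{E}^q$.

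Finally, to connect this exact expression to the amplification phenomenon of \figref{fig:loss-var}, I would bound the coordinatewise factor using $\min\mathbf{E}^q$ and $\max\mathbf{E}^q$: because $e^{-\mathbf{E}^q_j/\sqrt h}$ is monotone in $\mathbf{E}^q_j$, its deviation from $1$ is controlled by the extremes of $\mathbf{E}^q$, and the exponential makes even modest logit errors inflate the weight error, in contrast to the value case of \propref{prop:err-val}, where the output error is merely the linear image $\mathbf{A}^w\mathbf{E}^v$. This comparison is precisely what explains why key quantization is magnified relative to value quantization.
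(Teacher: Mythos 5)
Your proposal is correct and follows essentially the same route as the paper's proof: factor the unperturbed softmax weight $\mathbf{A}^w_{j}$ out of the difference of the two normalized exponentials, absorb the denominator coupling into the scalar ratio $sr = sft/sft^*$, collect the per-coordinate factors $1 - sr\cdot e^{-\mathbf{E}^q_j/\sqrt{h}}$ into a Hadamard product, and push the resulting weight error through the final multiplication by $\mathbf{V}$ via \propref{prop:err-mul}. You are in fact slightly more careful than the paper about the sign of the exponent (the paper silently drops the minus sign when passing from $e^{-\mathbf{x}_q\mathbf{E}^k_r/\sqrt{h}}$ to $e^{\mathbf{E}^q/\sqrt{h}}$), but this does not constitute a different argument.
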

\begin{proof}
    Consider the error in the $1,r$-th element of $\mathbf{A}^w$.
    It is given by
    \begin{align}
        \label{eq:err-key-aw}
        &\frac{e^{\frac{\sum_i q_i\mathbf{K}_{i,r}}{\sqrt{h}}}}{sft}-\frac{e^{\frac{\sum_i q_i\mathbf{K}_{i,r}^*}{\sqrt{h}}}}{sft^*} \notag \\
        =& \frac{e^{\frac{\sum_i q_i\mathbf{K}_{i,r}}{\sqrt{h}}}}{sft}(1-\frac{sft}{sft^*}\frac{e^{\frac{\sum_i q_i\mathbf{K}_{i,r}^*}{\sqrt{h}}}}{e^{\frac{\sum_i q_i\mathbf{K}_{i,r}}{\sqrt{h}}}}) \notag \\
        =&\mathbf{A}^w_{1,r}(1-\frac{sft}{sft^*}e^{\frac{\sum_iq_i(\mathbf{K}_{i,r}^*-\mathbf{K}_{i,r})}{\sqrt{h}}})\notag \\
        =&\mathbf{A}^w_{1,r}(1-\frac{sft}{sft^*}e^{\frac{-\mathbf{x}_q\mathbf{E}^k_{r}}{\sqrt{h}}})\notag \\
        =&\mathbf{A}^w_{1,r}(1-\frac{sft}{sft^*}e^{\frac{\mathbf{E}^q}{\sqrt{h}}})
    \end{align}
    This can be reformulated in matrix form as $\mathbf{A}_{1,r}^w\odot(1-sr\cdot e^{\frac{\mathbf{E}^q}{\sqrt{h}}})$.
    Since $\mathbf{A}^w$ is subsequently multiplied by $\mathbf{V}$, in accordance with \propref{prop:err-mul}, the error in the attention output is given by
    \begin{align}
    \label{eq:key-error-final}
    (\mathbf{A}_{1,r}^w\odot(1-sr\cdot e^{\frac{\mathbf{E}^q}{\sqrt{h}}}))\cdot\mathbf{V}.
    \end{align}
\end{proof}

To demonstrate the difference in error caused by the quantization of the key and value matrix, we select three decoder layers and plotted the error from \equref{eq:val-error} and \equref{eq:key-error-final} in \figref{fig:scatter-mse}.
The results indicate that the distribution of the key matrix quantization error is more sparse around $0$ compared to the value matrix quantization, which consequently leads to a larger MSE for the key matrix.

\begin{figure*}[t!]
    \centering
    \includegraphics[width=0.95\linewidth]{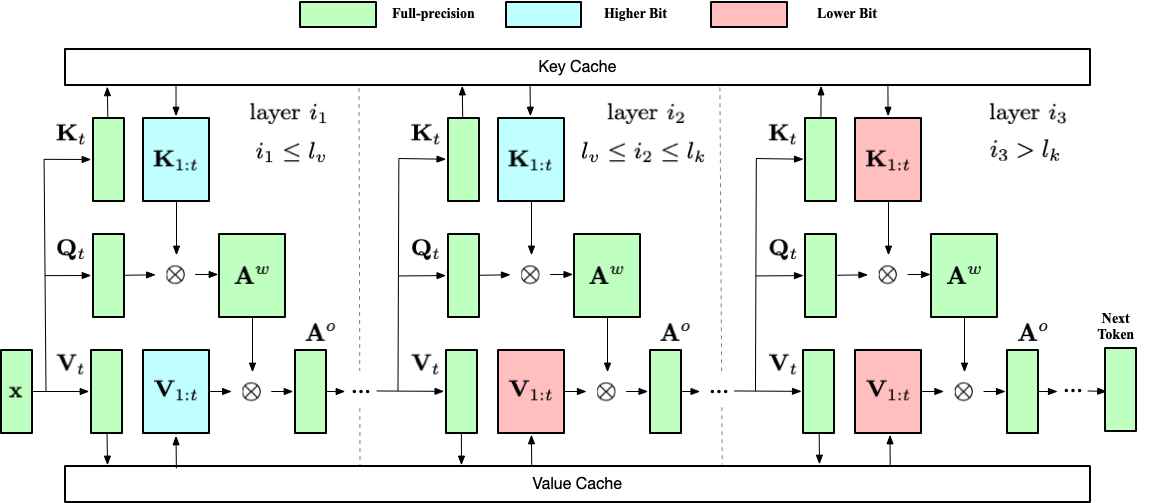}
    \caption{Workflow of \ourmodel.}
    \label{fig:kvmix}
\end{figure*}

\stitle{Discussion: Why does the key matrix quantization leads to a larger error than the value matrix?}
For the value matrix, it is not influenced by the softmax function, making its error straightforward to compute and directly tied to the quantization error.
In contrast, for the key matrix quantization as shown in \equref{eq:key-error-final}, $sft=\sum_{j}e^{\frac{\sum_iq_i\mathbf{K}_{i,j}}{\sqrt{h}}}$ and $sft^*=\sum_{j}e^{\frac{\sum_iq_i\mathbf{K}_{i,j}^*}{\sqrt{h}}}$ are relatively large, and they are nearly equivalent because $\mathbf{K}^*$ is the quantization of $\mathbf{K}$.
This suggests that $sr\approx 1$ and the key discrepancy between the errors of the key matrix quantization and value matrix quantization arises in the Hadamard product of $1-sr\cdot e^{\frac{\mathbf{E}^q}{\sqrt{h}}}$.
(1) Multiplication of $\mathbf{x}_q$.
Observe that the first dimension of $\mathbf{x}_q$ is consistently set to 1.
Thus, the multiplication by $\mathbf{x}_q$ results in each element accumulating the error from quantization multiple times.
This is illustrated in \equref{eq:key-error-final}, where each element of $\mathbf{E}^q$ has a comparatively larger error than the error distribution of $\mathbf{P}$, given that $\mathbf{E}^q=q\mathbf{E}^k$.
(2) Utilization of the softmax function.
In the key matrix error obtained in \equref{eq:key-error-final}, the original error from the key matrix quantization is situated in the exponentiation of $e$.
As the proof of \thmref{thm:key-err} demonstrates, this replacement stems from the utilization of the softmax function, in which all elements are treated as the exponentiation of $e$.
In consideration of the super-linear growth rate of the power function, the softmax function further exacerbates the loss induced by the key matrix quantization.

\section{\ourmodel: Layer-wise Quantization with Asymmetric Configuration}
\label{sec:layer}
From the discussion in \secref{sec:aas}, it is evident that the quantization of the key matrix could potentially result in a more significant loss for the attention output due to the specific role of the key matrix.
In response to this, our study introduces {\ourmodel}, a simple yet efficacious quantization strategy that blends various degrees of quantization for the key and value matrix based on their respective impacts on the loss of the attention mechanism.

\stitle{Basic Idea.}
{\ourmodel} applies various degrees of quantization to the key and value matrix at the layer level.
Specifically,
{\ourmodel} introduces two parameters, $l_k$ and $l_v$, to control the degree of quantization for the key and value matrix, respectively.
During the inference of the model, for the key (respectively value) matrix, the initial $l_k$ (respectively $l_v$) attention layers utilize a quantization method with a higher number of bits (\eg 4-bit or 2-bit). In contrast, the remaining attention layers employ a quantization method with fewer bits (\ie 1-bit).

\figref{fig:kvmix} illustrates the design of {\ourmodel} where green, blue, and red blocks symbolize the matrices in full-precision, higher-bit quantization, and lower-bit quantization, respectively.
For each attention layer, its key and value matrices are cached with different quantization bits based on the layer index.
As demonstrated in \figref{fig:kvmix}, those layers with a layer index $i\leq l_k$ (respectively $i\leq l_v$) will cache the quantized key (respectively value) matrices with higher bits, while the other layers will use lower bits.
After generating the query, key, and value matrix of the current token, \ie $\mathbf{K_t}$, $\mathbf{Q_t}$, and $\mathbf{V_t}$, the LLM will produce the output of the attention $\mathbf{A}^o$, as illustrated in \equref{eq:attn-qk}-\equref{eq:attn-awv}.
Given that {\ourmodel} chooses $l_k$ and $l_v$ such that $l_v\leq l_k$, those decoder layers with indices in range $[l_v,l_k]$ will contain a blend of higher bits for key matrix and lower bits for value matrix.

The design of {\ourmodel} relies on the observations in \secref{sec:aas} as well as certain intuitive insights.

\stitle{(1) Asymmetric Configuration.}
In light of our observation in \secref{sec:aas}, we decide to independently configure the degree of quantization for key and value matrices by defining the configuration parameters $l_k$ and $l_v$, respectively.
Besides, since the quantization error for the key matrix results in a larger error for the attention output, we generally choose a larger $l_k$ than $l_v$ to achieve performance comparable to the models with full precision.

\stitle{(2) Layer-wise Quantization.}
While generating a token, the quantization error is accumulated as the number of attention layers increases.
Therefore, by choosing the later attention layers to be quantized with fewer bits, we can mitigate the error caused by the quantization from being amplified, while concurrently allowing the KV cache to be quantized with a less number of bits.


\section{Evaluation}
\label{sec:eval}

\subsection{Experimental Setup}
\stitle{Tested Models.}
We examine the performance of {\ourmodel} using the widely used LLM family Llama~\cite{arxiv23llama2}, which includes Llama-2-7b and Llama-2-13b.
All models are deployed based on the LLM implementation from Huggingface\footnote{https://huggingface.co/} with the default implementation of quantization selected from \cite{icml24kivi}.

\stitle{Tasks and Baselines.}
In terms of model performance, we evaluate {\ourmodel} on tasks with a standard context length, including CoQA and TruthfulQA from LM-Eval~\cite{lmeval}, as well as tasks with long context length from LongBench~\cite{arxiv23longbench}, including TriviaQA, TREC, SAMSum, RepoBench-P, and Qasper.
Regarding model efficiency, we assess the memory usage of {\ourmodel} under various quantization configurations, comparing it with previous works that handle the key and value matrices uniformly, including the original floating implementation, and KIVI~\cite{icml24kivi} with 2-bit quantization.

\stitle{Implementation.}
Each decoder layer in {\ourmodel} adheres to the quantization scheme outlined in KIVI~\cite{icml24kivi}, that is, per-channel quantization for the key matrix and per-token quantization for the value matrix, with a group size of 32.
{\ourmodel} utilizes a combination of higher 2-bit quantization and lower 1-bit quantization.
To validate our analysis concerning the diverse errors instigated by the key matrix quantization and value matrix quantization, we also examine {\ourmodel} under various quantization configurations.

\begin{table}
    \centering
    \small{
    \begin{tabular}{cccc}
    \toprule
        Model & Type & TruthfulQA & CoQA \\ \midrule
        \multirow{3}{*}{Llama-2-7b} & float & 30.76 & 63.88 \\ 
         ~ & KIVI-2bit & 33.95 & 63.05 \\ \cmidrule{2-4}
         ~ & AsymKV-0/16 & 12.81 & 34.18 \\ 
         ~ & AsymKV-16/0 & \textbf{38.77}* & \textbf{58.12}* \\ \midrule
        \multirow{3}{*}{Llama-2-13b} & float & 29.53 & 66.37 \\ 
        ~ & KIVI-2bit & 29.84 & 66.23 \\ \cmidrule{2-4}
        ~ & AsymKV-0/20 & 9.52 & 43.13 \\ 
        ~ & AsymKV-20/0 & \textbf{28.44}* & \textbf{61.42}* \\ \bottomrule
    \end{tabular}
    }
    \caption{Evaluation on tasks with normal context length (\textbf{bold}: Higher bits for key matrix better than lower bits for key matrix, *: {\ourmodel} achieves at least 90\% performance of floating-type models).}
    \label{tab:exp-normal}
\end{table}

\begin{table*}
    \centering
    \small{
    \begin{tabular}{cccccccc}
    \toprule
        Model & Type & TriviaQA & TREC & SAMSum & RepoBench-P & Qasper \\ \midrule
        \multirow{3}{*}{Llama-2-7b} & float & 87.72 & 66.0 & 41.69 & 59.82 & 9.52 \\ 
         & KIVI-2bit & 87.64 & 66.0 & 41.62 & 56.81 & 9.73 \\ \cmidrule{2-7}
         &AsymKV-0/32 & 11.6 & 25.0 & 3.79 & 23.9 & 3.18 \\ 
         & AsymKV-32/0 & \textbf{85.27}* & \textbf{65.50}* & \textbf{38.28}* & \textbf{43.35} & \textbf{8.96}* \\ \midrule
        \multirow{3}{*}{Llama-2-13b} & float & 87.87 & 70.00 & 43.55 & 56.42 & 9.32 \\
         & KIVI-2bit & 87.31 & 69.50 & 43.52 & 53.66 & 8.27 \\ \cmidrule{2-7}
         & AsymKV-0/40 & 24.57 & 28.5 & 5.25 & 25.33 & 3.33 \\ 
         & AsymKV-40/0 & \textbf{86.70}* & \textbf{67.50}* & \textbf{41.90}* & \textbf{46.92} & \textbf{8.78}* \\ \bottomrule
    \end{tabular}
    }
    \caption{Evaluation on LongBench tasks (\textbf{bold}: Higher bits for key matrix better than lower bits for key matrix, *: {\ourmodel} achieves at least 90\% performance of floating-type models).}
    \label{tab:exp-long}
\end{table*}

\subsection{Evaluation Results}
\subsubsection{Tasks with Normal Context Length}
\label{subsec:eval-normal}
\tabref{tab:exp-normal} presents the experimental results for tasks with normal context length, namely CoQA and TruthfulQA.
In this case, the model {\ourmodel}-$l_k$/$l_v$ represents {\ourmodel} where the key and value matrices in the first $l_k$ and $l_v$ attention layers are respectively quantized with $2$-bit, while those in other layers are quantized with $1$ bit.

Upon examining {\ourmodel} with various quantization configurations, we observe that {\ourmodel}-16/0 (respectively {\ourmodel}-20/0) performs better than {\ourmodel}-0/16 (respectively {\ourmodel}-0/20) for Llama-7b (respectively Llama-13b).
This finding aligns with our observation and analysis in \secref{sec:aas}, where the quantization of key matrices results in a higher loss than that of value matrices.
Therefore, even though {\ourmodel}-16/0 and {\ourmodel}-0/16 occupy the same space in GPU memory, a quantization strategy that employs higher bits for the key matrix and lower bits for the value matrix enhances performance.

Besides, {\ourmodel} yields performance comparable to Llama and KIVI while using less GPU memory, achieved by implementing asymmetric $1$-bit quantization.
In particular, {\ourmodel}-16/0 and {\ourmodel}-20/0 assures a minimum performance of $91.0$\% that of Llama and $92.2$\% that of KIVI.
In contrast to KIVI, which quantizes both key and value matrices with $2$ bits, {\ourmodel} allows for 75\% decoder layers quantized with the extreme 1 bit, which is more efficient in peak memory.

\subsubsection{Tasks with Long Context Length}
\tabref{tab:exp-long} presents the experimental results for tasks with long context lengths.
Mirroring the tasks with normal context length, {\ourmodel} with a higher bit count in the key matrix (\ie {\ourmodel}-32/0 for Llama-7b and {\ourmodel}-40/0 for Llama-13b) once more surpasses {\ourmodel} with value matrices quantized with higher bits (\ie {\ourmodel}-0/32 and {\ourmodel}-0/40).
This aligns with the reasons as illustrated in \secref{subsec:eval-normal}.

Besides, in the case of long context lengths, {\ourmodel} necessitates more decoder layers quantized with higher bits to attain performance comparable to the baselines ($l_k=32/40$ for long context length vs. $l_k=16/20$ for normal context length).
When contrasted with the baselines, {\ourmodel} could assure performance levels of at least 91.8\% and 92.0\% relative to Llama and KIVI across $4$ out of $5$ datasets, except for RepoBench-P.

\begin{figure*}[t!]
  \centering
  \subfloat[{\ourmodel} for Llama-7b (batch size 48).]{\includegraphics[width=0.50\textwidth]{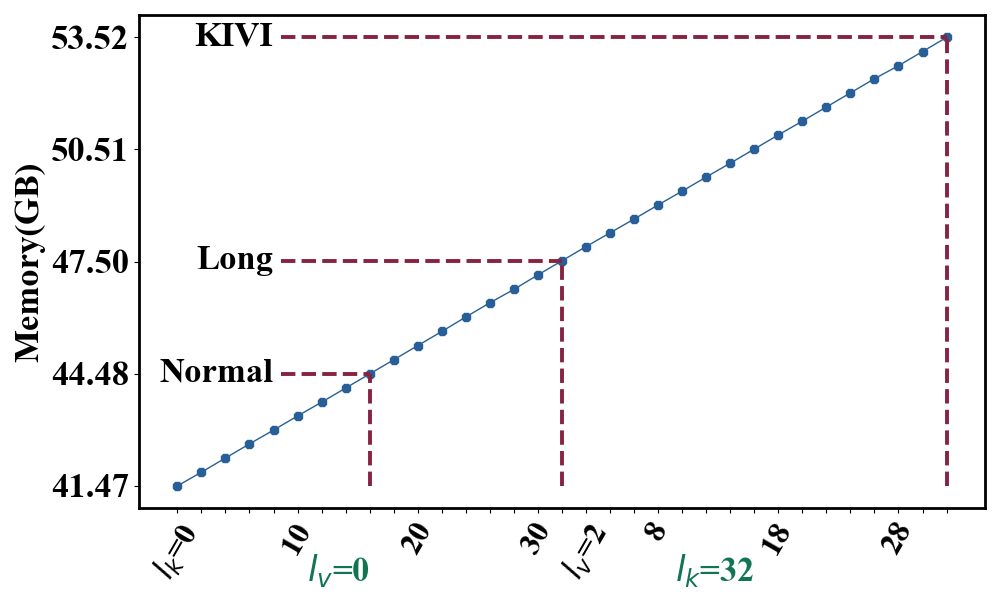}\label{fig:mem-7b}}
  \hfill
  \subfloat[{\ourmodel} for Llama-13b (batch size 36).]{\includegraphics[width=0.50\textwidth]{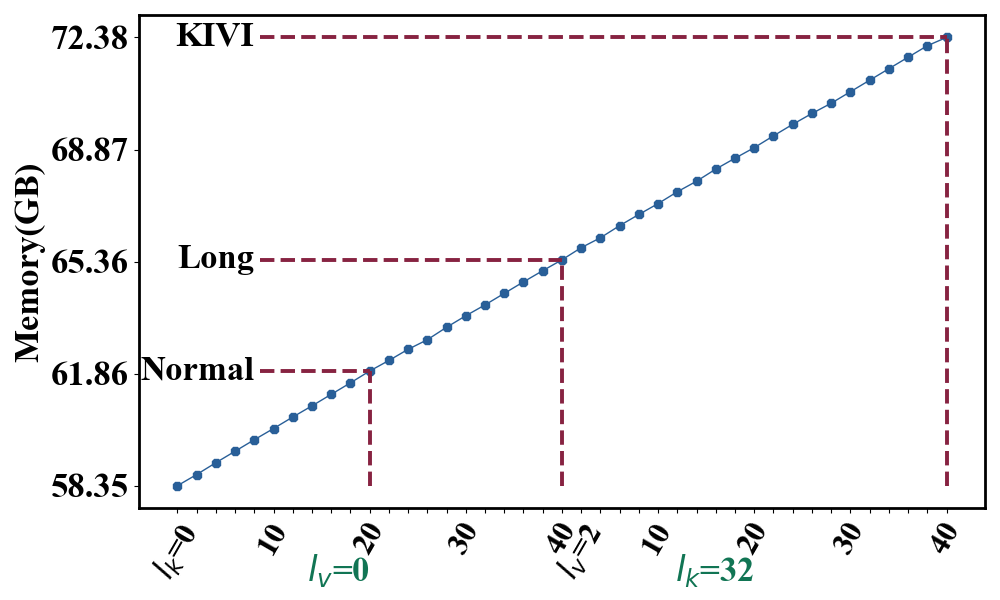}\label{fig:mem-13b}}
    
  \caption{Memory Variation of {\ourmodel}.}
  \label{fig:exp-mem}
\end{figure*}

\subsubsection{Peak Memory}
\figref{fig:exp-mem} reports the experimental results of the peak memory in GPU for {\ourmodel}.
We choose a batch size of $48$ for Llama-7b and $36$ for Llama-13b, and report the peak storage consumption by varying the quantization configurations $l_k$ and $l_v$.
Specifically, we first set $l_v=0$, implying all value matrices of the decoder layers are quantized with $1$ bit, and increase the number of key matrices quantized with $2$ bits, \ie $l_k$, from $0$ to the maximum number of decoder layers, illustrated in the left part of \figref{fig:mem-7b} and \figref{fig:mem-13b}.
Then, we keep all key matrices quantized with $2$ bits and further increase the number of value matrices quantized with $2$ bits, \ie $l_v$, as shown in the right part of \figref{fig:mem-7b} and \figref{fig:mem-13b}.
It is noteworthy that when both $l_k$ and $l_v$ achieve the maximum number of layers, the results correspond to the performance of KIVI.

From \figref{fig:exp-mem}, as more attention layers are quantized with higher bits, the consumed space in GPU increases almost linearly until all attention layers employ a quantization configuration with higher bits.
The locations where {\ourmodel} achieves comparable performance to the floating-point model on tasks with normal and long context lengths are highlighted.
 For Llama-7b, {\ourmodel} can ensure similar performance while saving 9.0 GB and 6.0 GB of space for the tasks with normal and long context lengths respectively, compared to KIVI.
 For Llama-13b, the memory saved increases to 10.4GB and 7.0GB space for tasks with normal and long context lengths respectively.
 
\section{Related Works}
\label{sec:rel}

Large language models have gained considerable attention since their inception.
Despite their impressive performance, these models are constrained by their immersive quantity of parameters, which results in hardware limitations and poor throughput.

To address these issues, recent research trends are centered on reducing the size of LLMs~\cite{arxiv23squeeze}.
Among these methods, quantization techniques target the transformation of a portion of the model's parameters into integers, which reduces the space of LLMs.
For instance, llm.int8~\cite{nips22int8} suggests quantizing the query, key, and value weights of LLMs using the round-to-nearest method, \ie, mapping each floating-point number to its closest integer.
AWQ~\cite{mlsys24awq} and SmoothQuant~\cite{icml23smoothquant} further introduce an amplifying scale prior to quantization to prevent extremely large outliers during the process.
Omniquant~\cite{arxiv23omniquant} devises a quantization algorithm by implementing a learnable scale and learnable clipping during quantization.
GPTQ~\cite{arxiv22gptq} perceives quantization as a problem of minimizing square error and designing the quantization algorithms using an approximation of the second-order information.
These studies mainly focus on the quantization of the model weights.

On the other hand, to mitigate redundant computations across token generation, LLMs utilizes KV cache.
While KV cache enhances inference efficiency, it consumes significant space, particularly when generating long contexts.
Consequently, another line of research focuses on the compression of KV cache~\cite{nips24h2o,kwon2023efficient,nips23s3,nips24scissor}.
Among these approaches, quantization techniques have garnered much attention and have emerged as a popular tool for KV cache compression.

Previous works have applied consistent quantization techniques for both model weights and KV cache.
SmoothQuant~\cite{icml23smoothquant} also quantizes the query, key, and value matrices to further minimize memory usage.
In contrast, Flexgen~\cite{icml23flexgen} structures the problem of quantization in an environment conprising GPU, CPU, and memory.
ATOM~\cite{zhao2024atom} utilizes the quantized weights and re-quantizes the key and value cache matrices into integer types.

More recently, several works have examined the distriution of the KV cache and formulated quantization algorithms specifically tailored for it.
For example, ATOM~\cite{zhao2024atom} discovered that the key matrix contains more outliers than the value matrix.
KIVI~\cite{icml24kivi} extends on this observation and suggesting quantizing the key and value matrices from different perspectives (employing per-channel quantization for key matrix and per-token quantization for value matrix).
Alongside this, KVQuant employs the per-channel quantization to the key matrices and introduces a non-uniform quantization technique for KV cache.
IntactKV~\cite{arxiv24intact} identifies outliers caused by common tokens, including the punctuations and split tokens.
Meanwhile, WKVQuant~\cite{arxiv24wkvquant} proposes a two-dimensional quantization strategy to smoothly handle the outliers across different channels.
Other studies seek to combine the quantization techniques with other techniques to approach a fine-grained KV cache compression.
GEAR~\cite{arxiv24gear} establishes the residual matrix and sparse matrix to capture the residual and individual outliers during the quantization.
\cite{arxiv24notoken} proposes a mix-precision quantization scheme and quantizes the crucial KV cache with higher bits.
In contrast to the aforementioned studies, 
We propose a simple yet effective solution: employing distinct quantization configurations for the key and value matrices at the layer level.
This approach is designed to accommodate the asymmetric role of the key and value matrices.

\section{Conclusions}
\label{sec:con}
This paper primarily concentrates on the asymmetric roles of the key and value matrices in the quantization of the KV cache.
We analyze why quantizing the key matrix leads to a more significant performance drop than quantizing the value matrix and attribute it to the multiplication of $\mathbf{x}_q$ and the implementation of the softmax function.
Based on this analysis, we introduce {\ourmodel}, which applies asymmetric and layer-wise quantization configurations to the key and value matrices.
{\ourmodel} facilitates a mixed quantization approach with $2$ bits and $1$ bit, while simultaneously ensuring performance comparable to the floating-type model.
Extensive experiments validate our analysis of the asymmetric roles of the key and value matrices.

\section{Limitations}
Despite {\ourmodel} facilitating the quantization of $1$ bit for KV cache in LLMs, it still depends on exhaustive testing to identify the optimal configurations for different LLMs, \ie configurations that yield performance close to models in floating types.
This approach is relatively inefficient.
A potential futural direction could involve efficiently identifying the optimal configurations for LLMs.

\bibliography{asymKV}

\clearpage
\newpage
\appendix

\section{Supplemental Experiments}
\label{sec:appendix}

In this section, we present the complete experimental setup and the corresponding results.

\subsection{Experimental Settings}
\stitle{Inference Settings.} 
Following KIVI~\cite{icml24kivi}, {\ourmodel} employs per-channel quantization for key matrices and per-token quantization for value matrices.
Consequently, both KIVI and {\ourmodel} store the key matrices of a limited number of tokens in floating-point types, a parameter referred to as residual length.
We choose a residual length of $128$ for tasks with normal context length, while for tasks with long context length, we opt for a residual length of $512$.
For the peak memory usage experiments, we standardized the generation length of tokens to $4096$.

\stitle{Implementation.}
{\ourmodel} is implemented using PyTorch and is built upon the Huggingface codebase. 
All 
experiments are executed on a machine equipped with 200GB memory and an A800 GPU with 80GB memory.

\subsection{Experimental Results}
\subsubsection{Results on Tasks with Normal Context Length}
\tabref{tab:full-exp-normal} proposes the performance of {\ourmodel} with varying $l_k$ and $l_v$ values for tasks with normal context length.
For Llama-7b, we choose $l_k,l_v\in\{6,12,16,20\}$ and for Llama-13b, we consider $l_k,l_v\in\{5,10,20,30\}$.

As the number of decoder layers quantized with higher bits increases, the performance of {\ourmodel} improves until it reaches performance levels comparable to the floating-point model and KIVI.
Besides, we observe that {\ourmodel} with value matrices quantized using lower bits, \ie {\ourmodel}-$l$/0, consistently outperforms {\ourmodel} with key matrices quantized using lower bits, \ie {\ourmodel}-0/$l$, and the difference is substantial.
This observation confirms that choosing a configuration with $l_k>l_v$ can enhance the performance of {\ourmodel}.
{\ourmodel} can achieve at least 90\% of the performance of floating-point models when a quantization configuration that follows {\ourmodel}-16/0 for Llama-7b and {\ourmodel}-20/0 for Llama-13b is utilized.

\begin{table}
    \centering
    \small{
    \begin{tabular}{cccc}
    \toprule
        Model & Type & TruthfulQA & CoQA \\ \midrule
        \multirow{10}{*}{Llama-2-7b} & float & 30.76 & 63.88 \\ 
         ~ & KIVI-2bit & 33.95 & 63.05 \\ \cmidrule{2-4}
         ~ & AsymKV-0/6 & 4.11 & 26.90 \\
         ~ & AsymKV-0/12 & 7.37 & 28.92 \\
         ~ & AsymKV-0/16 & 12.81 & 34.18 \\ 
         ~ & AsymKV-0/22 & 12.23 & 35.60 \\ \cmidrule{2-4}
         ~ & AsymKV-6/0 & 7.64 & 36.00 \\
         ~ & AsymKV-12/0 & 29.17* & 48.02 \\
         ~ & AsymKV-16/0 & 38.77* & 58.12* \\ 
         ~ & AsymKV-22/0 & 40.14* & 59.83* \\ \midrule
        \multirow{10}{*}{Llama-2-13b} & float & 29.53 & 66.37 \\ 
        ~ & KIVI-2bit & 29.84 & 66.23 \\ \cmidrule{2-4}
        ~ & AsymKV-0/5 & 4.81 & 37.53 \\
        ~ & AsymKV-0/10 & 4.16 & 39.70 \\
        ~ & AsymKV-0/20 & 9.52 & 43.03 \\ 
        ~ & AsymKV-0/30 & 10.24 & 45.20 \\ \cmidrule{2-4}
        ~ & AsymKV-5/0 & 15.35 & 41.25 \\
        ~ & AsymKV-10/0 & 19.43 & 45.40 \\
        ~ & AsymKV-20/0 & 28.44* & 61.42* \\ 
        ~ & AsymKV-30/0 & 29.50* & 64.92* \\ \bottomrule
    \end{tabular}
    }
    \caption{Evaluation on tasks with normal context length (*: {\ourmodel} achieves at least 90\% performance of floating-type models).}
    \label{tab:full-exp-normal}
\end{table}

\subsubsection{Results on Tasks with Long Context Length}
\tabref{tab:full-exp-long} presents the experimental results for tasks with long context length.
For key and value matrices, we set aside one type of matrices quantized with higher bits (\ie $l_k/l_v=32/40$) and vary the number of the other type of matrices that are quantized with lower bits.

Similar to the tasks with normal context lengths, the performance of {\ourmodel} augments as more key and value matrices are quantized with higher bits.
Besides, {\ourmodel} with key matrices quantized with higher bits ({\ourmodel}-32/$l_v$ for Llama-7b and {\ourmodel}-40/$l_v$ for Llama-13b) outperforms {\ourmodel} with value matrices quantized with higher bits, despite them occupying the same GPU memory.

\begin{table*}
    \centering
    \small{
    \begin{tabular}{cccccccc}
    \toprule
        Model & Type & TriviaQA & TREC & SAMSum & RepoBench-P & Qasper \\ \midrule
        \multirow{8}{*}{Llama-2-7b} & float & 87.72 & 66.0 & 41.69 & 59.82 & 9.52 \\ 
         & KIVI-2bit & 87.64 & 66.0 & 41.62 & 56.81 & 9.73 \\ \cmidrule{2-7}
         &AsymKV-0/32 & 11.6 & 25.0 & 3.79 & 23.9 & 3.18 \\ 
         & AsymKV-6/32 & 19.02 & 29.0 & 5.53 & 28.46 & 4.04 \\
         &AsymKV-12/32 & 22.96 & 42.50 & 8.77 & 32.34 & 5.13 \\ \cmidrule{2-7}
         & AsymKV-32/0 & 85.27* & 65.50* & 38.28* & 43.35 & 8.96* \\
         & AsymKV-32/6 & 86.36* & 66.50* & 39.75* & 49.93 & 9.04* \\
         & AsymKV-32/12 & 86.62* & 66.00* & 40.93* & 52.46 & 9.64* \\
         \midrule
        \multirow{8}{*}{Llama-2-13b} & float & 87.87 & 70.00 & 43.55 & 56.42 & 9.32 \\
         & KIVI-2bit & 87.31 & 69.50 & 43.52 & 53.66 & 8.27 \\ \cmidrule{2-7}
         & AsymKV-0/40 & 24.57 & 28.5 & 5.25 & 25.33 & 3.33 \\
         & AsymKV-10/40 & 42.30 & 41.00 & 12.64 & 28.65 & 5.10 \\
         & AsymKV-15/40 & 48.14 & 50.00 & 17.82 & 31.37 & 5.73 \\ \cmidrule{2-7}
         & AsymKV-40/0 & 86.70* & 67.50* & 41.90* & 46.92 & 8.78* \\ 
         & AsymKV-40/10 & 86.80* & 69.00* & 42.23* & 50.68 & 7.56* \\
         &AsymKV-40/15 & 87.39* & 69.00* & 42.45* & 50.25 & 8.58* \\
         \bottomrule
    \end{tabular}
    }
    \caption{Evaluation on LongBench tasks (*: {\ourmodel} achieves at least 90\% performance of floating-type models).}
    \label{tab:full-exp-long}
\end{table*}




\end{document}